\pdfoutput=1

\documentclass{article}

\usepackage{times, graphicx, subfigure, natbib}

\usepackage{easybmat}
\usepackage{amsmath, amsthm, amssymb}
\usepackage[ruled,vlined,linesnumbered,inoutnumbered]{algorithm2e}

\newcommand{\useFIG}[1]{#1} 
\newcommand{\Reg}{R} 
\newcommand{\Loss}{L} 

\SetKwFor{ParallelForEach}{for each}{in parallel do}{endfor}


\usepackage[accepted]{icml2014a}




\icmltitlerunning{Distributed Coordinate Descent Method for Learning with Big Data}

\theoremstyle{plain}
\newtheorem{theorem}{Theorem}

\newtheorem{lemma}[theorem]{Lemma}

\theoremstyle{definition}


\newcommand{\R}{\mathbb{R}}
\newcommand{\B}{{\cal P}}
\newcommand{\bA}{\mathbf{A}}
\newcommand{\bQ}{\mathbf{Q}}
\newcommand{\bG}{\mathbf{G}}
\newcommand{\bM}{\mathbf{M}}
\newcommand{\bU}{\mathbf{U}}

\DeclareMathOperator{\Diag}{Diag}

\newcommand{\TT}[0]{{\tau}}

\newcommand{\st}{\;:\;}
\newcommand{\ve}[2]{\langle #1 ,  #2 \rangle}

\newcommand{\eqdef}{:=}

\newcommand{\Prob}{\mathbf{Prob}}
\newcommand{\E}{\mathbf{E}}

\newcommand{\vt}[2]{#1_{#2}}

\newcommand{\mrow}[2]{#1_{{#2}{:}}}
\newcommand{\mcol}[2]{#1_{{:}{#2}}}


\newcommand{\U}{e}



\begin{document}

\twocolumn[
\icmltitle{Distributed Coordinate Descent Method for Learning with Big Data}


\icmlauthor{Peter Richt\'{a}rik}{peter.richtarik@ed.ac.uk}
\icmlauthor{Martin Tak\'{a}\v{c}}{martin.taki@gmail.com}
\icmladdress{University of Edinburgh, King's Buildings, EH9 3JZ Edinburgh, United Kingdom}

\icmlkeywords{coordinate descent, distributed algorithms, boosting}

\vskip 0.3in
]

\begin{abstract}
In this paper we develop and analyze Hydra:  HYbriD cooRdinAte descent method for solving loss minimization problems with big data. We initially partition the coordinates (features) and assign each partition to a different node  of a cluster. At every iteration, each node picks a random subset of the coordinates from those it owns, independently from the other computers, and in parallel computes and applies updates to the selected coordinates based on a simple closed-form formula. We give bounds on the number of iterations sufficient to approximately solve the problem with high probability, and show how it depends on the data and on the partitioning. We perform numerical experiments with a LASSO instance described by a 3TB matrix.
\end{abstract}

\section{Introduction}

Randomized coordinate descent methods (CDMs) are increasingly popular in many learning tasks, including boosting, large scale regression and training linear support vector machines. CDMs update a single randomly chosen coordinate at a time by moving in the direction of the negative partial derivative (for smooth losses). Methods of this type, in various settings, were studied by several authors, including \citet{SDCA-2008, SSS-TewariICML09, Nesterov:2010RCDM, RT:UCDC, Necoara:Coupled, TRG:InexactCDM, Stoch-dual-Coord-Ascent, lu2013complexity}.

It is clear that in order to utilize modern shared-memory parallel computers, more coordinates should be updated at each iteration. One way to approach this is via partitioning the coordinates into blocks, and operating on a single randomly chosen block at a time, utilizing parallel linear algebra libraries. This approach was pioneered by \citet{Nesterov:2010RCDM} for smooth losses, and was extended to regularized problems in \citep{RT:UCDC}. Another popular approach involves working with a random subset of coordinates \citep{Bradley:PCD-paper}. These approaches can be combined, and theory was developed for methods that update a random subset of blocks of coordinates at a time \cite{RT:PCDM, FR:SPCDM2013}. Further recent works on parallel coordinate descent  include \citep{RT:TTD2011, BoostingMomentum2013,  Fer-ParallelAdaboost, TRB2013:DQA,SSS2013-accelerated}.

However, none of these methods are  directly scalable to problems of sizes so large that a single computer is unable to store the data describing the instance, or is unable to do so efficiently (e.g., in memory). In a big data scenario of this type, it is imperative to split the data across several nodes (computers) of a cluster, and design efficient methods for this  memory-distributed setting.

\textbf{Hydra.} In this work we design and analyze the first distributed coordinate descent method: \emph{Hydra: HYbriD cooRdinAte descent.} The method is ``hybrid'' in the sense that it uses parallelism at two levels: i) across a number of nodes in a cluster and ii) utilizing the parallel processing power of individual nodes\footnote{We like to think of each node of the cluster as one of the many heads of the mythological Hydra.}.

Assume we have $c$ nodes (computers) available, each with parallel processing power. In Hydra, we initially partition the coordinates $\{1,2,\dots,d\}$ into $c$  sets, $\B_1, \dots, \B_c$, and assign each set to a single computer. For simplicity, we assume that the partition is balanced: $|\B_k|=|\B_l|$ for all $k,l$. Each computer \emph{owns} the coordinates belonging to its partition for the duration of the iterative process. Also, these coordinates are stored locally. The data matrix describing the problem is partitioned in such a way that all data describing features belonging to $\B_l$ is stored at computer $l$. Now, at each iteration, each computer, independently from the others, chooses a random subset of $\tau$ coordinates from those they own, and computes and applies updates to these coordinates. Hence, once all computers are done, $c\tau$ coordinates will have been updated. The resulting vector, stored as $c$ vectors of size $s=d/c$ each, in a distributed way, is the new iterate. This process is
repeated until convergence. It is important that the computations are done locally on each node, with minimum communication overhead. We comment on this and further details in the text.

\textbf{The main insight.} We show that the parallelization potential of Hydra, that is, its ability to accelerate as $\tau$ is increased, depends on two data-dependent quantities: i) the \emph{spectral norm of the data ($\sigma$)} and ii) a \emph{partition-induced norm of the data ($\sigma'$)}. The first quantity completely describes the behavior of the method in the $c=1$ case. If   $\sigma$ is small, then utilization of more processors (i.e., increasing $\tau$) leads to nearly linear speedup. If $\sigma$ is large, speedup may be negligible, or there may be no speedup whatsoever. Hence, the size of $\sigma$ suggests whether it is worth to use more processors or not. The second quantity, $\sigma'$, characterizes the  effect of the initial partition on the algorithm, and as such is relevant in the $c>1$ case. Partitions with small $\sigma'$ are preferable. For both of these quantities we derive easily computable and interpretable estimates ($\omega$ for $\sigma$ and $\omega'$ for $\sigma'$), which may be used by practitioners to gauge, a-priori, whether their problem of interest is likely to be a good fit for Hydra or not. We show that for strongly convex losses, Hydra outputs an $\epsilon$-accurate solution with probability at least $1-\rho$ after $\tfrac{d\beta}{c \tau \mu} \log (\tfrac{1}{\epsilon \rho})$ iterations (we ignore some small details here), where a single iteration corresponds to changing of $\tau$ coordinates by each of the $c$ nodes; $\beta$ is a stepsize parameter and $\mu$ is a strong convexity constant.

\textbf{Outline.} In Section~\ref{SEC:problem} we describe the structure of the optimization problem we consider in this paper and state assumptions. We then proceed to Section~\ref{sec:algorithm}, in which we describe the method. In Section~\ref{SEC:CONVERGENCE} we prove bounds on the number of iterations sufficient for Hydra to find an  approximate solution with arbitrarily high probability. A discussion of various aspects of our results, as well as a comparison with existing work, can be found in Section~\ref{SEC:LESSONS}. Implementation details of our distributed communication protocol are laid out in Section~\ref{sec:implementation}. Finally, we comment on our computational experiments with a big data (3TB matrix) L1 regularized least-squares instance in Section~\ref{SEC:EXPERIMENTS}.


\section{The problem}\label{SEC:problem}

We study the problem of minimizing regularized loss,
\begin{equation}\label{eq:P}
\min_{x\in \R^d} \Loss(x) \eqdef f(x) + \Reg(x),
\end{equation}
where $f$ is a smooth convex loss, and $\Reg$ is a convex (and possibly nonsmooth) regularizer.

\paragraph{Loss function $f$.} We assume that there exists a positive definite matrix $\bM \in \R^{d \times d}$ such that for all $x,h\in \R^d$,
\begin{equation}\label{eq:kvadraticUpperbound}
 f(x+h) \leq f(x)+ (f'(x))^T h +\tfrac{1}{2} h^T \bM h,
\end{equation}
and write $\bM = \bA^T \bA$, where $\bA$ is some $n$-by-$d$ matrix.

\emph{Example.} These assumptions are natural satisfied in many popular problems. A typical loss function has the form
\begin{equation}\label{eq:SVMproblemFormulation}
f(x) = \textstyle{\sum_{j=1}^n \ell(x, \mrow{\bA}{j}, y^j)},
\end{equation}
where $\bA\in\R^{n \times d}$ is a matrix encoding $n$ examples with $d$ features,
$\mrow{\bA}{j}$ denotes $j$-th row of $\bA$, $\ell$ is some loss function acting on a single example and $y \in \R^n$ is a vector of labels. For instance, in the case of the three losses $\ell$ in Table~1, assumption \eqref{eq:kvadraticUpperbound} holds with $\bM = \bA^T \bA$ for SL and HL, and $\bM = \frac{1}{4} \bA^T \bA$ for LL  \citep{Bradley:PCD-paper}.

\begin{centering}
\begin{table}
\label{tbl:3_losses}
\begin{tabular}{|l|l|}
\hline
square loss (SL)& $  \frac12 (y^j - \mrow{\bA}{j} x )^2$
\\
\hline
logistic loss (LL)&$ \log (1+\exp(- y^j \mrow{\bA}{j} x ) )$
\\
\hline
square hinge  loss (HL)& $ \frac12 \max\{0, 1 - y^j \mrow{\bA}{j} x \}^2$\\
\hline
\end{tabular}
\caption{Examples of loss functions $\ell$ covered by our analysis.}
\end{table}

\end{centering}

\paragraph{Regularizer $\Reg$.} We assume that $\Reg$ is separable, i.e., that it can be decomposed as $ \Reg(x)=\sum_{i=1}^d \Reg_i(x^{i})$, where $x^i$ is the $i$-th coordinate of $x$, and the functions $\Reg_i:\R\to \R \cup \{+\infty\}$ are convex and closed.

\emph{Example.} The choice $\Reg_i(t) = 0$ for $t\in [0,1]$ and $\Reg_i(t)=+\infty$, otherwise, effectively models bound constraints, which are relevant for SVM dual. Other popular choices are $\Reg(x)=\lambda \|x\|_1$ (L1-regularizer) and $\Reg(x)=\tfrac{\lambda}{2} \|x\|_2^2$ (L2-regularizer).

\section{Distributed coordinate descent} \label{sec:algorithm}

We consider a setup with $c$ computers (nods) and first partition the $d$ coordinates (features) into $c$ sets
$\B_1,  \dots, \B_c$ of equal cardinality, $s\eqdef d/c$, and assign set $\B_l$ to node $l$. Hydra is described in Algorithm~\ref{alg:distributedEx}. Hydra's convergence rate depends on the partition; we comment on this later in Sections~\ref{SEC:CONVERGENCE} and \ref{SEC:LESSONS}. Here we simply assume that we work with a fixed partition. We now comment on the steps.

\textbf{Step 3.} At every iteration, each of the $c$ computers picks a random subset of $\tau$ features from those that it owns, uniformly at random, independently of the choice of the other computers. Let $\hat{S}_l$ denote the set picked by node $l$ . More formally, we require that i) $\hat{S}_l \subseteq \B_l$, ii) $\Prob(|\hat{S}_l|=\tau)=1$, where $1\leq \tau\leq s$, and that iii) all subsets of $\B_l$ of cardinality $\tau$ are  chosen equally likely.
In summary, at every iteration of the method, features belonging to the random set $\hat{S}\eqdef \cup_{l=1}^c \hat{S}_l$ are updated. Note that $\hat{S}$ has size $c\tau$, but that, as a sampling from the set $\{1,2,\dots,d\}$, it does not choose all cardinality $c\tau$ subsets of $\{1,2,\dots,d\}$ with equal probability. Hence, the analysis of parallel coordinate descent methods of \citet{RT:PCDM} does not apply. We will say that $\hat{S}$ is a \emph{$\tau$-distributed sampling} with respect to the partition $\{\B_1,\dots,\B_c\}$.

\textbf{Step 4.} Once computer $l$ has chosen its set of $\tau$ coordinates to work on in Step 3, it will \emph{in parallel} compute  (Step 5) and apply (Step 6) updates to them. 

\begin{algorithm}[t!]
\caption{Hydra: HYbriD cooRdinAte descent}
\label{alg:distributedEx}
\SetKwInOut{Initialize}{Parameters}
 \Initialize{$\vt{x}{0} \in \R^d$; $\{\B_1,\dots,\B_c\}$; $\beta>0$, $\tau$; $k \leftarrow 0$\;}

\Repeat{ happy }{
   $x_{k+1} \leftarrow x_k$ \;

  \ParallelForEach{ computer $l \in \{ 1, \ldots, c \} $}{
    Pick a random set of coordinates $\hat{S}_l \subseteq \B_l$ , $|\hat{S}_l|=\tau$ 

    \ParallelForEach{ $i \in \hat{S}_l$}{
      $h_k^i \leftarrow \arg \min_{t} f'_i(x_k)t + \tfrac{M_{ii}\beta}{2}t^2 + \Reg_i(x_k^i + t)$ \;

      Apply the update: $\vt{x}{k+1}^i \leftarrow \vt{x}{k+1}^i +  h^i_k $ \;
    }
    }
  }
\end{algorithm}

\textbf{Step 5.} This is a critical step where updates to coordinates $i \in \hat{S}_l$ are computed. By $f'_i(x)$ we denote the $i$-th partial derivative of $f$ at $x$. Notice that the formula is very simple as it involves one dimensional optimization.

\emph{Closed-form formulas.} Often, $h_k^i$ can be computed in closed form. For $\Reg_i(t) = \lambda_i |t|$ (weighted L1 regularizer),
$h_{k}^i$ is  the point in the interval $[\tfrac{-\lambda_i-f'_i(x_k)}{\bM_{ii}\beta},\tfrac{\lambda_i-f'_i(x_k)}{\bM_{ii}\beta}]$ which is closest to $-x_k^i$.
If $\Reg_i(t) = \tfrac{\lambda_i}{2} t^2$  (weighted L2 regularizer), then $h_k^i = -\tfrac{f'_i(x_k)}{\lambda_i \bM_{ii}\beta}$.

    \emph{Choice of $\beta$.} The choice of the step-size parameter $\beta$ is of paramount significance for the performance of the algorithm, as argued for different but related algorithms by \citet{RT:PCDM, minibatch-ICML2013, FR:SPCDM2013}. We will discuss this issue at length in Sections~\ref{SEC:CONVERGENCE} and \ref{SEC:LESSONS}.

    \emph{Implementation issues:} Note that computer $l$ needs to know the partial derivatives of $f$ at $x_k$ for coordinates $i \in \hat{S}_l \subseteq \B_l$. However, $x_k$, as well as the data describing $f$, is distributed among the $c$ computers. One thus needs to devise a fast and communication efficient way of computing these derivatives. This issue will be dealt with in Section~\ref{sec:implementation}.

\textbf{Step 6.} Here all the $\tau$ updates computed in Step 5 are applied to the iterate. Note that the updates are \emph{local}: computer $l$ only updates coordinates it owns, which are stored locally. Hence, this step is communication-free.

\textbf{Step 7.} Here we are just establishing a way of labeling iterates. That is, starting with $x_k$, all $c$ computers modify $c\tau$ entries of $x_k$ in total, in a distributed way, and the result is called $x_{k+1}$. Our method is therefore inherently \emph{synchronous.} We do not allow, in our analysis, for the various computers to proceed until all computers have updated all coordinates. In practice, a carefully designed asynchronous implementation will be faster, and our experiments in Section~\ref{SEC:EXPERIMENTS} are done with such an implementation.

\section{Convergence rate analysis}\label{SEC:CONVERGENCE}

\emph{Notation:} For any $\bG\in \R^{d\times d}$, let $D^{\bG} = \Diag(\bG)$. That is, $D^\bG_{ii}= \bG_{ii}$ for all $i$ and $D^\bG_{ij}=0$ for $i \neq j$. Further, let $B^{\bG}\in \R^{d\times d}$ be the block diagonal of $\bG $ associated with the partition $\{\B_1,\dots,\B_c\}$. That is, $B^{\bG}_{ij} = \bG_{ij}$ whenever $i,j\in \B_l$ for some $l$, and $B^\bG_{ij} = 0$ otherwise.

\subsection{Four important quantities: $\sigma', \omega', \sigma, \omega$}

Here we define two quantities, $\sigma'$ and $\sigma$, which, as we shall see, play an important role in the computation of the stepsize parameter $\beta$ of Algorithm~\ref{alg:distributedEx}, and through it, in understanding its rate of convergence and potential for speedup by parallelization and distribution. As we shall see, these quantities might not be easily computable. We therefore also provide each with an easily computable and interpretable upper bound, $\omega'$ for $\sigma'$ and $\omega$ for $\sigma$.

Let \begin{equation}\label{eq:Q}\bQ \eqdef (D^\bM)^{-1/2}\bM (D^\bM)^{-1/2},\end{equation}
and notice that, by construction, $\bQ$ has ones on the diagonal. Since $M$ is positive definite, $\bQ$ is as well. For each $l \in \{1,\dots,c\}$, let $\bA_l\in \R^{n\times s}$ be the column submatrix of $\bA$ corresponding to coordinates $i \in \B_l$. The diagonal blocks of $B^\bQ$ are the matrices $\bQ^{ll}$, $l=1,2,\dots,c$, where
\begin{equation}\label{eq:xxx09}\bQ^{kl} \eqdef (D^{\bA_k^T \bA_k})^{-1/2}\bA_k^T \bA_l (D^{\bA_l^T \bA_l})^{-1/2} \in \R^{s \times s}\end{equation}
for each $k,l \in \{1,2,\dots,c\}$.  We now define
\begin{equation}\label{eq:jsdhd8ddj}\sigma' \eqdef \max \{x^T \bQ x \st x \in \R^d, \; x^T B^\bQ x \leq 1\},\end{equation}
\begin{equation}\label{eq:jsdhd8ddj-1}\sigma \eqdef \max \{x^T \bQ x \st x \in \R^d, \; x^T x \leq 1\}.\end{equation}
A useful consequence of \eqref{eq:jsdhd8ddj} is the inequality \begin{equation} \label{eq:sjnsud909323}x^T (\bQ-B^\bQ)x \leq (\sigma'-1) x^T B^\bQ x.\end{equation}

\paragraph{Sparsity.} Let $a_{rl}$ be the $r$-th row of $\bA_l$, and define
\[\omega' \eqdef \max_{1\leq r \leq n} \left\{\omega'(r)\eqdef |\{l  \st l \in \{1,\dots,c\},\; a_{rl} \neq 0\}|\right\},\]
where $\omega'(r)$ is the number of matrices $\bA_l$ with a nonzero in row $r$. Likewise, define
\[\omega \eqdef \max_{1\leq r \leq n} \left\{\omega(r)\eqdef |\{l  \st l \in \{1,\dots,c\},\; \bA_{rl} \neq 0\}|\right\},\]
where $\omega(r)$ is the number of nonzeros in the $r$-th row of $\bA$.

\begin{lemma}\label{lem:shssujs7} The following relations hold:
\begin{equation}\label{eq:sigma-omega}\max\{1,\tfrac{\sigma}{s}\} \leq \sigma' \leq \omega' \leq c, \quad 1\leq \sigma \leq \omega \leq d.\end{equation}
\end{lemma}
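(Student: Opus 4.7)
My plan is to handle the two chains of inequalities by exploiting the key factorization $\bQ = \tilde{\bA}^T \tilde{\bA}$, where $\tilde{\bA} \eqdef \bA (D^{\bM})^{-1/2}$. The trivial endpoints $\omega \leq d$ and $\omega' \leq c$ are immediate from the definitions (there are only $d$ coordinates and $c$ blocks, respectively), and the lower bounds $1 \leq \sigma$ and $1 \leq \sigma'$ follow by testing a standard basis vector $e_i$: since $\bQ$ is normalized so that $\bQ_{ii}=1$ and $B^\bQ_{ii}=\bQ_{ii}=1$, we get $e_i^T \bQ e_i = e_i^T B^\bQ e_i = e_i^T e_i = 1$, and both maxima are at least this value.

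The core of the argument is the pair of inequalities $\sigma \leq \omega$ and $\sigma' \leq \omega'$, which I would prove in parallel via Cauchy--Schwarz on the factored form. Writing $\tilde{a}_r$ for the $r$-th row of $\tilde{\bA}$, I would expand
\begin{equation*}
x^T \bQ x = \|\tilde{\bA} x\|_2^2 = \sum_{r=1}^n (\tilde{a}_r^T x)^2.
\end{equation*}
For $\sigma \leq \omega$, I apply Cauchy--Schwarz to the inner product $\tilde{a}_r^T x$, using that it has at most $\omega(r) \leq \omega$ nonzero terms, yielding $(\tilde{a}_r^T x)^2 \leq \omega \sum_i \tilde{\bA}_{ri}^2 x_i^2$; summing over $r$ and using $\sum_r \tilde{\bA}_{ri}^2 = \bQ_{ii} = 1$ gives $x^T\bQ x \leq \omega \|x\|^2$. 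For $\sigma' \leq \omega'$, I use the coarser block decomposition $\tilde{a}_r^T x = \sum_{l=1}^c \tilde{a}_{rl}^T x_{\B_l}$, which has at most $\omega'(r)\leq \omega'$ nonzero block-terms; Cauchy--Schwarz then gives $(\tilde{a}_r^T x)^2 \leq \omega' \sum_l (\tilde{a}_{rl}^T x_{\B_l})^2$. Summing over $r$ and reorganizing, the right-hand side becomes $\omega' \sum_l \|\tilde{\bA}_l x_{\B_l}\|^2 = \omega' \sum_l x_{\B_l}^T \bQ^{ll} x_{\B_l} = \omega' \, x^T B^\bQ x$, so the constraint $x^T B^\bQ x \leq 1$ yields $\sigma' \leq \omega'$.

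It remains to show $\sigma/s \leq \sigma'$. Here I would bootstrap from the previously established inequality $\sigma \leq \omega \leq d$ applied separately to each diagonal block: $\bQ^{ll}$ is an $s\times s$ matrix with ones on the diagonal that is itself of the normalized Gram form $(\bQ^{ll} = \tilde{\bA}_l^T \tilde{\bA}_l)$, so its largest eigenvalue is at most $s$. Consequently, for any $x \in \R^d$,
\begin{equation*}
x^T B^\bQ x = \sum_{l=1}^c x_{\B_l}^T \bQ^{ll} x_{\B_l} \leq s \sum_{l=1}^c \|x_{\B_l}\|^2 = s\, x^T x.
\end{equation*}
Hence any $x$ with $x^T x \leq 1$ can be rescaled to $x/\sqrt{s}$, which is feasible for the $\sigma'$ program, and therefore $\sigma' \geq (x/\sqrt{s})^T \bQ (x/\sqrt{s}) = x^T \bQ x / s$; taking the supremum yields $\sigma' \geq \sigma/s$.

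I expect the main subtlety to be bookkeeping in the block-level Cauchy--Schwarz step for $\sigma' \leq \omega'$: one has to be careful that summation indices commute correctly so that $\sum_r \sum_l (\tilde{a}_{rl}^T x_{\B_l})^2$ reassembles into $\sum_l \|\tilde{\bA}_l x_{\B_l}\|^2$, and that the identification $\tilde{\bA}_l^T \tilde{\bA}_l = \bQ^{ll}$ matches the definition \eqref{eq:xxx09}. Everything else is essentially the same two-line Cauchy--Schwarz argument applied at two different granularities (coordinate vs.\ block), plus trivial counting bounds.
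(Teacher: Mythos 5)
Your proof is correct, but it reaches the central inequalities $\sigma\leq\omega$ and $\sigma'\leq\omega'$ by a genuinely different and more self-contained route than the paper. The paper proves $\sigma'\leq\omega'$ by viewing $\phi(x)=\tfrac12 x^T\bQ x$ as a partially block-separable function, showing that $\phi'$ is block Lipschitz with constant $1$ in the norms $\|\cdot\|_{(k)}$ induced by the diagonal blocks $\bQ^{kk}$, and then invoking Theorem~7 of \citet{RT:PCDM} to upgrade this to a global Lipschitz constant $\omega'$ (the degree of partial block separability); the coordinate-level chain $1\leq\sigma\leq\omega\leq d$ is then obtained as the special case of singleton blocks. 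You instead work directly with the Gram factorization $\bQ=\tilde{\bA}^T\tilde{\bA}$, $\tilde{\bA}=\bA(D^{\bM})^{-1/2}$, and apply the elementary bound $(\sum_{k} a_k)^2\leq m\sum_k a_k^2$ row by row, once at coordinate granularity and once at block granularity; the identification $\tilde{\bA}_l^T\tilde{\bA}_l=\bQ^{ll}$ with \eqref{eq:xxx09} and the normalization $\sum_r\tilde{\bA}_{ri}^2=\bQ_{ii}=1$ make the bookkeeping go through exactly as you anticipate. Your argument buys a two-line, citation-free proof that treats both chains symmetrically; the paper's argument buys a conceptual link between $\omega'$ and the partial-separability machinery that underlies its broader ESO framework. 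For $\tfrac{\sigma}{s}\leq\sigma'$ the two proofs coincide in substance: your bound $x^TB^\bQ x\leq s\,x^Tx$ is precisely the paper's claim $\theta\leq s$, established the same way (each normalized block $\bQ^{ll}$ has spectral norm at most its dimension $s$), and the rescaling by $1/\sqrt{s}$ is just a repackaging of the paper's inclusion of feasible sets. The lower bounds and the counting bounds $\omega'\leq c$, $\omega\leq d$ are handled essentially identically in both.
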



\subsection{Choice of the stepsize parameter $\beta$}

We analyze Hydra with  stepsize parameter $\beta\geq \beta^*$, where
\begin{equation}
\begin{split}\beta^* &\eqdef \beta^*_1+\beta^*_2, \\
\beta^*_1 &\eqdef 1+\tfrac{(\tau-1)(\sigma-1)}{s_1},\;\; \beta^*_2 \eqdef \left(\tfrac{\tau}{s} - \tfrac{\tau-1}{s_1}\right) \tfrac{\sigma'-1}{\sigma'}\sigma,
\end{split}\label{eq:beta}
\end{equation}
and $s_1=\max(1,s-1)$. As we shall see in Theorem~\ref{thm:complexity-strongly-convex-case}, fixing $c$ and $\tau$, the number of iterations needed by Hydra find a solution is proportional to $\beta$. Hence, we would wish to use $\beta$ which is as small as possible, but not smaller than the safe choice $\beta=\beta^*$, for which convergence is proved. In practice, $\beta$ can often be chosen smaller than $\beta^*$, leading to larger steps and faster convergence. If the quantities $\sigma$ and $\sigma'$ are hard to compute, then one can replace them by the easily computable upper bounds $\omega$ and $\omega'$, respectively. However, there are cases when $\sigma$ can be efficiently approximated and is much smaller than $\omega$. In some ML datasets with $\bA\in\{0,1\}^{n\times d}$, $\sigma$ is close to the average number of nonzeros in a row of $\bA$, which can be significantly smaller than the maximum, $\omega$. On the other hand, if $\sigma$ is difficult to compute, $\omega$ may provide a good proxy. Similar remarks apply to $\sigma'$. In the $\tau\geq 2$ case (which covers all interesting uses of Hydra), we may ignore $\beta_2^*$ altogether, as implied by the following result.
\begin{lemma}\label{prop:2beta_1} 
If $\tau\geq 2$, then $\beta^* \leq 2\beta_1^*$.
\end{lemma}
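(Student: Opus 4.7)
The plan is to prove the stronger claim $\beta_2^* \leq \beta_1^*$, from which $\beta^* = \beta_1^* + \beta_2^* \leq 2\beta_1^*$ follows immediately.

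First I would dispose of the degenerate case in the definition of $s_1$: the hypothesis $\tau \geq 2$ combined with $\tau \leq s$ forces $s \geq 2$, so $s_1 = s-1$. A direct calculation then gives
$$\frac{\tau}{s} - \frac{\tau-1}{s-1} \;=\; \frac{s-\tau}{s(s-1)},$$
which is nonnegative since $\tau \leq s$. By Lemma~\ref{lem:shssujs7} we have $\sigma' \geq 1$, so $\frac{\sigma'-1}{\sigma'} \leq 1$, and hence
$$\beta_2^* \;\leq\; \frac{s-\tau}{s(s-1)}\,\sigma.$$

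Next I would clear denominators and reduce the desired inequality $\beta_2^* \leq \beta_1^*$ to
$$(s-\tau)\sigma \;\leq\; s(s-1) + s(\tau-1)(\sigma-1),$$
which, upon collecting $\sigma$ on one side, is equivalent to
$$\sigma\bigl[\,2s - \tau(s+1)\,\bigr] \;\leq\; s(s-\tau).$$
The crucial observation is that under $\tau \geq 2$ the bracket on the left is strictly negative: $\tau(s+1) \geq 2(s+1) = 2s+2 > 2s$. Since $\sigma \geq 1 > 0$, the LHS is therefore negative, while the RHS is nonnegative (as $\tau \leq s$), so the inequality is trivial.

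There is no real obstacle here; the whole argument is bookkeeping. The only two things one must remember are (i) using $\tau \geq 2$ to exclude the $s=1$ case so that $s_1 = s-1$, and (ii) using $\tau \geq 2$ a second time to ensure the bracket $2s - \tau(s+1)$ has the correct sign. Note also that the bound on $\beta_2^*$ is quite loose (we threw away the factor $\tfrac{\sigma'-1}{\sigma'}$), which suggests that the constant $2$ in the statement is not tight but is a convenient clean bound.
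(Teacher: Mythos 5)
Your proof is correct and follows essentially the same route as the paper: both reduce the claim to the stronger inequality $\beta_2^* \leq \beta_1^*$ and then verify it by elementary algebra, using $\tau\geq 2$ and $\tau\leq s$ in exactly the places you identify. The only (immaterial) difference is that the paper bounds $\tfrac{\sigma'-1}{\sigma'} \leq 1-\tfrac{s}{d}$ via $\sigma'\leq c$ from Lemma~\ref{lem:shssujs7}, whereas you use the cruder $\tfrac{\sigma'-1}{\sigma'}\leq 1$; both reductions go through.
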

This eliminates the need to compute $\sigma'$, at the expense of at most doubling $\beta$, which translates into doubling the number of iterations.

\subsection{Separable approximation}

We first establish a useful identity for the expected value of a random quadratic form obtained by sampling the rows and columns of the underlying matrix via the distributed sampling $\hat{S}$. Note that the result is a direct generalization of Lemma~1 in \citep{minibatch-ICML2013} to the $c>1$ case.

For $x\in \R^d$ and $\emptyset \neq S\subseteq [d] \eqdef \{1,2,\dots,d\}$, we write $x^S \eqdef \sum_{i\in S}  x^{i} e_i$, where $e_i$ is the $i$-th unit coordinate vector. That is, $x^S$ is the vector in $\R^d$ whose coordinates $i\in S$ are identical to those of $x$, but are zero elsewhere.

\begin{lemma}\label{lem:lem8} Fix arbitrary $\bG\in \R^{d \times d}$ and $x \in \R^d$ and let $s_1 = \max(1,s-1)$. Then $\E[(x^{\hat{S}})^T \bG x^{\hat{S}}]$ is equal to
\begin{equation}\label{eq:0966}\tfrac{\tau}{s}\left[ \alpha_1 x^T D^{\bG} x + \alpha_2 x^T \bG x + \alpha_3 x^T (\bG-B^\bG) x
\right],\end{equation}
where $\alpha_1 = 1-\tfrac{\tau-1}{s_1}$, $\alpha_2 = \tfrac{\tau-1}{s_1}$, $\alpha_3 = \tfrac{\tau}{s} - \tfrac{\tau-1}{s_1}$.
\end{lemma}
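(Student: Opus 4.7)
The proof is essentially a bookkeeping exercise: expand the quadratic form entrywise, take expectations, and then re-group. I would begin from the identity
\begin{equation*}
(x^{\hat{S}})^T \bG x^{\hat{S}} \;=\; \sum_{i,j=1}^d \bG_{ij}\,x^i x^j\,\mathbf{1}_{\{i\in \hat{S}\}}\,\mathbf{1}_{\{j\in\hat{S}\}},
\end{equation*}
so that by linearity $\E[(x^{\hat{S}})^T \bG x^{\hat{S}}] = \sum_{i,j} \bG_{ij} x^i x^j \,\Prob(i,j\in\hat{S})$. The whole argument reduces to computing three distinct probabilities, corresponding to how the pair $(i,j)$ sits relative to the partition $\{\B_1,\dots,\B_c\}$.

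Second, I would evaluate $\Prob(i,j\in\hat{S})$ in each case, using that each $\hat{S}_l$ is a uniform random $\tau$-subset of $\B_l$ and that different $\hat{S}_l$'s are independent: (a) if $i=j$, then $\Prob(i\in\hat{S})=\tau/s$; (b) if $i\ne j$ but $i,j\in\B_l$ for the same $l$, then $\Prob(i,j\in\hat{S}_l)=\binom{s-2}{\tau-2}/\binom{s}{\tau}=\tfrac{\tau(\tau-1)}{s\,s_1}$ (the definition $s_1=\max(1,s-1)$ absorbs the degenerate $s=1$ case, for which this event simply does not occur); (c) if $i\in\B_k$, $j\in\B_l$ with $k\ne l$, independence gives $\Prob(i,j\in\hat{S})=(\tau/s)^2$.

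Third, I would split the double sum $\sum_{i,j}\bG_{ij}x^i x^j$ along the same three cases and recognize the pieces as matrix quadratic forms: the diagonal sum is $x^T D^{\bG} x$; the same-block sum (including diagonal) is $x^T B^{\bG} x$, so the strictly off-diagonal same-block sum is $x^T B^{\bG} x - x^T D^{\bG} x$; and the different-block sum is $x^T(\bG-B^{\bG}) x$. Combining with the probabilities from the previous step yields
\begin{equation*}
\E[(x^{\hat{S}})^T \bG x^{\hat{S}}] = \tfrac{\tau}{s}\Big[x^T D^{\bG} x + \tfrac{\tau-1}{s_1}(x^T B^{\bG} x - x^T D^{\bG} x) + \tfrac{\tau}{s} x^T(\bG-B^{\bG})x\Big].
\end{equation*}

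Finally I would algebraically massage this into the claimed form. Substituting $x^T B^{\bG}x = x^T\bG x - x^T(\bG-B^{\bG})x$ in the middle term collects the $x^T D^{\bG}x$ coefficient into $1-\tfrac{\tau-1}{s_1}=\alpha_1$, pulls out $\tfrac{\tau-1}{s_1}=\alpha_2$ in front of $x^T\bG x$, and combines the two $x^T(\bG-B^{\bG})x$ contributions into $\tfrac{\tau}{s}-\tfrac{\tau-1}{s_1}=\alpha_3$, exactly matching \eqref{eq:0966}. There is no real obstacle here beyond care with the $s=1$ edge case (where $\tau=1$, $B^{\bG}=D^{\bG}$, and both sides collapse to $x^T\bG x$); the rest is routine.
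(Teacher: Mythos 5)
Your proof is correct and follows essentially the same route as the paper's: both reduce $\E[(x^{\hat{S}})^T \bG x^{\hat{S}}]$ to $\sum_{i,j}\bG_{ij}x^ix^j\,\Prob(i,j\in\hat{S})$, compute the same three pairwise inclusion probabilities ($\tfrac{\tau}{s}$, $\tfrac{\tau(\tau-1)}{s s_1}$, $\tfrac{\tau^2}{s^2}$), and regroup into the stated quadratic forms. The only difference is that you derive the entrywise identity and the probabilities directly rather than citing them from \citet{RT:PCDM}, and you spell out the final algebraic regrouping and the $s=1$ edge case, which the paper leaves implicit.
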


We now use the above lemma to compute a separable quadratic upper bound on  $\E [ (h^{\hat{S}})^T \bM h^{\hat{S}} ]$.

\begin{lemma}\label{eq:isjs8jss} For all $h \in \R^d$,
\begin{equation}\label{eq:ESO-simple}\E\left[ (h^{\hat{S}})^T \bM h^{\hat{S}} \right] \leq  \tfrac{\tau}{s} \beta^* \left(h^T D^\bM h\right).\end{equation}
\end{lemma}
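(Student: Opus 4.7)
My plan is to apply Lemma~\ref{lem:lem8} with $\bG = \bM$ and then translate everything into a quadratic form in $\bQ$. Setting $y \eqdef (D^\bM)^{1/2} h$, I would use the identities $h^T D^\bM h = y^T y$, $h^T \bM h = y^T \bQ y$, and (since $D^\bM$ is diagonal, hence commutes with the block-diagonal structure) $h^T B^\bM h = y^T B^\bQ y$, so $h^T (\bM - B^\bM) h = y^T (\bQ - B^\bQ) y$. The lemma then yields
\begin{equation*}
\E\!\left[(h^{\hat S})^T \bM h^{\hat S}\right] \;=\; \tfrac{\tau}{s}\bigl[\alpha_1\, y^T y + \alpha_2\, y^T \bQ y + \alpha_3\, y^T (\bQ - B^\bQ) y\bigr],
\end{equation*}
and it suffices to bound the bracket by $\beta^* y^T y = \beta^* h^T D^\bM h$.

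The two key bounds come directly from the definitions of $\sigma$ and $\sigma'$. First, by \eqref{eq:jsdhd8ddj-1}, $y^T \bQ y \leq \sigma\, y^T y$. Second, by \eqref{eq:sjnsud909323}, $y^T (\bQ - B^\bQ) y \leq (\sigma'-1)\, y^T B^\bQ y$; rewriting $y^T B^\bQ y = y^T \bQ y - y^T (\bQ - B^\bQ) y$ and solving the resulting inequality gives the cleaner estimate
\begin{equation*}
y^T (\bQ - B^\bQ) y \;\leq\; \tfrac{\sigma'-1}{\sigma'}\, y^T \bQ y \;\leq\; \tfrac{\sigma'-1}{\sigma'}\,\sigma\, y^T y.
\end{equation*}

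Plugging these into the bracketed expression, the coefficient in front of $y^T y$ becomes
\begin{equation*}
\alpha_1 + \alpha_2 \sigma + \alpha_3 \tfrac{\sigma'-1}{\sigma'}\sigma.
\end{equation*}
A direct calculation using $\alpha_1 = 1 - \tfrac{\tau-1}{s_1}$, $\alpha_2 = \tfrac{\tau-1}{s_1}$, and $\alpha_3 = \tfrac{\tau}{s} - \tfrac{\tau-1}{s_1}$ simplifies the first two terms to $1 + \tfrac{(\tau-1)(\sigma-1)}{s_1} = \beta_1^*$, and the last term equals exactly $\beta_2^*$ by definition of \eqref{eq:beta}. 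Hence the coefficient is $\beta_1^* + \beta_2^* = \beta^*$, giving the claim.

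I expect no serious obstacle: the only slightly delicate point is recognizing that the $\sigma'$-type bound on $y^T(\bQ - B^\bQ)y$ can be bootstrapped into a bound against $y^T \bQ y$ rather than against $y^T B^\bQ y$, which is what allows $\sigma$ to enter $\beta_2^*$ together with the factor $(\sigma'-1)/\sigma'$. After that, the proof reduces to matching the $\alpha_i$'s against the two pieces of $\beta^*$.
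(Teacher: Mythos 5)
Your proposal is correct and follows essentially the same route as the paper: apply Lemma~\ref{lem:lem8} and pass to the normalized variable $(D^\bM)^{1/2}h$ so that all three quadratics become forms in $\bQ$, then bound the $\bQ$-term by $\sigma$ and the off-block term by $\tfrac{\sigma'-1}{\sigma'}\sigma$ times $y^Ty$. Your derivation of $y^T(\bQ-B^\bQ)y \leq \tfrac{\sigma'-1}{\sigma'}y^T\bQ y$ by ``solving'' the inequality \eqref{eq:sjnsud909323} is algebraically identical to the paper's convex-splitting step, and the matching of the $\alpha_i$'s to $\beta_1^*+\beta_2^*$ is exactly as in the paper.
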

\begin{proof} For $x\eqdef(D^{\bM})^{1/2}h$, we have $(h^{\hat{S}})^T\bM h^{\hat{S}}=(x^{\hat{S}})^T\bQ x^{\hat{S}}$. Taking expectations on both sides,
and applying Lemma~\ref{lem:lem8}, we see that $\E[(h^{\hat{S}})^T\bM h^{\hat{S}}]$ is equal to \eqref{eq:0966} for $\bG=\bQ$. It remains to bound the three quadratics in \eqref{eq:0966}. Since
$D^{\bQ}$ is the identity matrix, $x^T D^{\bQ}x = h^T D^{\bM} h$. In view of \eqref{eq:jsdhd8ddj-1}, the 2nd term is bounded as $x^T \bQ x \leq \sigma x^T x = \sigma h^T D^{\bM} h$. The last term, $x^T (\bQ-B^\bQ)$, is equal to
\begin{eqnarray}
&=& \tfrac{\sigma'-1}{\sigma'} x^T (\bQ-B^\bQ) x + \tfrac{1}{\sigma'} x^T (\bQ-B^\bQ) x \notag\\
&\overset{\eqref{eq:sjnsud909323}}{\leq}& \tfrac{\sigma'-1}{\sigma'} x^T (\bQ-B^\bQ) x + \tfrac{\sigma'-1}{\sigma'} x^T B^{\bQ} x \notag\\
&=& \tfrac{\sigma'-1}{\sigma'} x^T \bQ x  \overset{\eqref{eq:jsdhd8ddj-1}}{\leq}  \tfrac{\sigma'-1}{\sigma'} \sigma x^T x  =  \tfrac{\sigma'-1}{\sigma'} \sigma h^T D^{\bM} h.\notag
\end{eqnarray}
It only remains to plug in these three bounds into \eqref{eq:0966}.
\end{proof}

Inequalities of type \eqref{eq:ESO-simple} were first proposed and studied by \citet{RT:PCDM}---therein called Expected Separable Overapproximation (ESO)---and were shown to be important for the convergence of parallel coordinate descent methods. However, they studied a different class of loss functions $f$ (convex smooth and partially separable) and different types of random samplings $\hat{S}$, which did not allow them to propose an efficient distributed sampling protocol leading to a distributed algorithm. An ESO inequality was recently used by \citet{minibatch-ICML2013} to design a mini-batch stochastic dual coordinate ascent method (parallelizing the original SDCA methods of \citet{SDCA-2008}) and mini-batch stochastic subgradient descent method (Pegasos of \citet{Pegasos-MAPR}), and give bounds on how mini-batching leads to acceleration. While it was long observed that mini-batching often accelerates Pegasos in practice, it was
only shown with the help of an ESO inequality  that this is so also in theory. Recently, \citet{FR:SPCDM2013} have derived ESO inequalities for smooth approximations of nonsmooth loss functions and hence showed that parallel coordinate descent methods can accelerate on their serial counterparts on a  class of structured nonsmooth convex  losses. As a special case, they obtain a parallel randomized coordinate descent method for minimizing the logarithm of the exponential loss. Again, the class of losses considered in that paper, and the samplings $\hat{S}$, are different from ours. None of the above methods are distributed.

\subsection{Fast rates for distributed learning with Hydra} \label{sec:Hydra-convergence}

Let $x_0$ be the starting point of Algorithm~\ref{alg:distributedEx}, $x_*$ be an optimal solution of problem \eqref{eq:P} and let $\Loss^* = \Loss(x_*)$.
Further, define $\|x\|_{\bM}^2 \eqdef \sum_{i=1}^d \bM_{ii} (x^i)^2$ (a weighted Euclidean norm on $\R^d$) and assume  that $f$ and $\Reg$ are strongly convex
with respect to this norm with convexity parameters $\mu_f$ and $\mu_\Reg$, respectively. A function $\phi$ is strongly convex with parameter $\mu_\phi> 0$ if for all $x,h \in \R^d$,
\[\phi(x+h) \geq \phi(x) + (\phi'(x))^T h + \tfrac{\mu_\phi}{2}\|h\|_{\bM}^2,\]
where $\phi'(x)$ is a subgradient (or gradient) for $\phi$ at $x$.



We now show that  Hydra decreases strongly convex $\Loss$ with an exponential rate in $\epsilon$.

\begin{theorem} \label{thm:complexity-strongly-convex-case}
Assume $\Loss$ is strongly convex with respect to the norm $\|\cdot\|_{\bM}$, with $\mu_f+\mu_\Reg>0$. Choose $x_0\in \R^d$, $0<\rho<1$, $0<\epsilon<\Loss(x_0)-\Loss^*$ and
\begin{equation}\label{eq:k_uniform_strong}
 T \quad \geq \quad \frac{d}{c\tau} \times \frac{\beta+\mu_\Reg}{\mu_f+\mu_\Reg} \times \log \left(\frac{\Loss(x_0)-\Loss^*}{\epsilon\rho}\right),
\end{equation}
where $\beta \geq \beta^*$ and $\beta^*$ is given by \eqref{eq:beta}. If $\{x_k\}$ are the random points generated by Hydra (Algorithm \ref{alg:distributedEx}), then
\[\Prob(\Loss(x_T)-\Loss^*\leq \epsilon) \geq 1-\rho.\]
\end{theorem}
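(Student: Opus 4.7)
The overall strategy is the classical three-step recipe for randomized coordinate descent via Expected Separable Overapproximation (ESO): (i) obtain a one-step contraction in expectation on $L(x_k)-L^*$, (ii) telescope, (iii) convert to a high-probability bound via Markov.

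For step (i), I introduce the deterministic vector $h_k^\star\in \R^d$ of ``ideal'' updates, defined by $h_k^{\star,i}\eqdef\arg\min_{t} f'_i(x_k)t+\tfrac{\bM_{ii}\beta}{2}t^2+\Reg_i(x_k^i+t)$ for every $i\in [d]$; Hydra's actual step is then exactly $x_{k+1}=x_k+(h_k^\star)^{\hat S}$. Applying \eqref{eq:kvadraticUpperbound} with $h=(h_k^\star)^{\hat S}$ and using separability of $\Reg$,
\[
L(x_{k+1})\leq f(x_k)+f'(x_k)^T (h_k^\star)^{\hat S}+\tfrac{1}{2}((h_k^\star)^{\hat S})^T \bM (h_k^\star)^{\hat S}+\Reg(x_k+(h_k^\star)^{\hat S}).
\]
Taking expectation over $\hat S$ (conditional on $x_k$), the linear term contributes $\tfrac{\tau}{s}f'(x_k)^T h_k^\star$ (since $\Prob(i\in\hat S)=\tau/s$), the quadratic term is bounded by Lemma \ref{eq:isjs8jss} as $\tfrac{\tau\beta^\star}{2s}(h_k^\star)^T D^\bM h_k^\star$, and separability of $\Reg$ gives $\E[\Reg(x_k+(h_k^\star)^{\hat S})]=\tfrac{\tau}{s}\Reg(x_k+h_k^\star)+(1-\tfrac{\tau}{s})\Reg(x_k)$. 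Collecting terms and using $\beta\geq \beta^\star$ yields
\[
\E[L(x_{k+1})\mid x_k]\leq L(x_k)+\tfrac{\tau}{s}\bigl[f'(x_k)^T h_k^\star+\tfrac{\beta}{2}(h_k^\star)^T D^\bM h_k^\star+\Reg(x_k+h_k^\star)-\Reg(x_k)\bigr].
\]

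For step (ii), by construction $h_k^\star$ minimizes the bracketed expression over all of $\R^d$ (this is why separability of $\Reg$ is essential). Hence I may upper bound it by substituting any $y$, in particular $y=t(x_*-x_k)$ for $t\in[0,1]$. Convexity of $\Reg$ and strong convexity of $f$ and $\Reg$ w.r.t.\ $\|\cdot\|_\bM$ (with parameters $\mu_f,\mu_\Reg$) then give
\[
f'(x_k)^T y+\tfrac{\beta}{2} y^T D^\bM y+\Reg(x_k+y)-\Reg(x_k)\leq t[L(x_*)-L(x_k)]+\tfrac{t}{2}\bigl[(\beta+\mu_\Reg)t-(\mu_f+\mu_\Reg)\bigr]\|x_*-x_k\|_\bM^2.
\]
Choosing $t^*=\tfrac{\mu_f+\mu_\Reg}{\beta+\mu_\Reg}\in[0,1]$ (which lies in $[0,1]$ because $\beta\geq \beta^\star\geq \beta^\star_1\geq 1\geq \mu_f$, using that the upper bound \eqref{eq:kvadraticUpperbound} forces $\mu_f\leq 1$ when strong convexity is measured in the same $\|\cdot\|_\bM$) kills the quadratic term, producing
\[
\E[L(x_{k+1})-L^*\mid x_k]\leq \Bigl(1-\tfrac{c\tau}{d}\cdot\tfrac{\mu_f+\mu_\Reg}{\beta+\mu_\Reg}\Bigr)\bigl(L(x_k)-L^*\bigr),
\]
since $\tau/s=c\tau/d$. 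Taking total expectation, iterating this contraction $T$ times and using $(1-x)^T\leq e^{-xT}$ gives $\E[L(x_T)-L^*]\leq e^{-\eta T}(L(x_0)-L^*)$ with $\eta\eqdef\tfrac{c\tau}{d}\cdot\tfrac{\mu_f+\mu_\Reg}{\beta+\mu_\Reg}$.

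For step (iii), Markov's inequality gives $\Prob(L(x_T)-L^*>\epsilon)\leq \tfrac{1}{\epsilon}e^{-\eta T}(L(x_0)-L^*)$, and requiring this to be $\leq\rho$ is equivalent to the stated lower bound on $T$ in \eqref{eq:k_uniform_strong}. The main obstacle I anticipate is purely bookkeeping: aligning the scaling $\tfrac{\tau}{s}$ vs.\ $\tfrac{c\tau}{d}$, handling both $\mu_f>0$ and $\mu_\Reg>0$ regimes uniformly so that the choice $t^*$ is admissible, and carefully verifying that the conditional expectation identity for the nonsmooth $\Reg$-term is valid (which uses separability of $\Reg$ and that each coordinate is sampled with marginal probability $\tau/s$). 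The ESO bound from Lemma \ref{eq:isjs8jss} does all the nontrivial probabilistic work.
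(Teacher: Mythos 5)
Your proposal is correct and follows essentially the same route as the paper: the paper's proof derives the same ESO-based one-step inequality and then defers to the steps of \citep[Theorem~20]{RT:PCDM}, which are precisely the contraction argument (substituting $y=t(x_*-x_k)$, choosing $t^*=\tfrac{\mu_f+\mu_\Reg}{\beta+\mu_\Reg}$) and the Markov-inequality conversion that you spell out. Your explicit verification that $t^*\in[0,1]$ via $\mu_f\leq 1\leq\beta_1^*\leq\beta$ is a detail the paper leaves implicit, but it is the right justification.
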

\begin{proof} Outline: We first claim that for all $x,h \in \R^d$,
\begin{equation*}
  \E[f(x+h^{\hat{S}})]
  \leq f(x) + \tfrac{\E[|\hat{S}|]}{d} \left((f'(x))^Th + \tfrac{\beta}{2} h^T D^{\bM} h    \right).
\end{equation*}
To see this, substitute $h\leftarrow h^{\hat{S}}$ into \eqref{eq:kvadraticUpperbound}, take expectations on both sides and then use Lemma~\ref{eq:isjs8jss} together with the fact that for any vector $a$, $\E [a^T h^{\hat{S}}] = \tfrac{\E[|\hat{S}|]}{d} = \tfrac{\tau c}{sc} = \tfrac{\tau}{s}$. The rest follows
by following the steps in the proof in \citep[Theorem~20]{RT:PCDM}.
\end{proof}

A similar result, albeit with the weaker rate $O(\tfrac{s \beta}{\tau \epsilon})$, can be established in the case when neither $f$ nor $\Reg$ are
strongly convex. In big data setting, where parallelism and distribution is unavoidable, it is much more relevant to study the dependence of the rate on parameters such as $\tau$ and $c$. We shall do so in the next section.

\section{Discussion} \label{SEC:LESSONS}

In this section we comment on several aspects of the rate captured in \eqref{eq:k_uniform_strong} and compare Hydra to
selected  methods.

\subsection{Insights into the convergence rate}

Here we comment in detail on the influence of the various design parameters ($c$ = \# computers, $s$ = \# coordinates owned by each computer, and $\tau$ = \# coordinates updated by each computer in each iteration), instance-dependent parameters ($\sigma, \omega, \mu_\Reg, \mu_f$), and parameters depending both on the instance and design ($\sigma', \omega'$),  on the stepsize parameter $\beta$, and through it, on the convergence rate described in Theorem~\ref{thm:complexity-strongly-convex-case}.

\paragraph{Strong convexity.} Notice that the size of $\mu_\Reg>0$ mitigates the effect of a possibly large $\beta$ on the bound \eqref{eq:k_uniform_strong}. Indeed, for large $\mu_\Reg$, the factor $(\beta+\mu_\Reg)/(\mu_f + \mu_\Reg)$ approaches 1, and the bound \eqref{eq:k_uniform_strong} is dominated by the term $\tfrac{d}{c\tau}$, which means that Hydra enjoys linear speedup in $c$ and $\tau$. In the following comments we will assume that $\mu_\Reg=0$, and focus on studying the dependence of the leading term $d\tfrac{\beta}{c\tau}$ on various quantities, including $\tau, c, \sigma$ and $\sigma'$.

\paragraph{Search for small but safe $\beta$.} As shown by \citet[Section~4.1]{minibatch-ICML2013},  mini-batch SDCA might \emph{diverge} in the setting with $\mu_f=0$ and $\Reg(x)\equiv 0$, even for a simple quadratic function with $d=2$, provided that $\beta=1$. Hence, small values of $\beta$ need to be avoided. However, in view of Theorem~\ref{thm:complexity-strongly-convex-case}, it is good if $\beta$ is as small as possible. So, there is a need for a ``safe'' formula for a small $\beta$. Our formula \eqref{eq:beta}, $\beta=\beta^*$, is serving that purpose. For a detailed  introduction into the issues related to selecting a good $\beta$ for parallel coordinate descent methods, we refer the reader to the first 5 pages of \citep{FR:SPCDM2013}.

\begin{table}
\begin{centering}
{\footnotesize
\begin{tabular}{|c|c|c|}
  \hline
   special case & $\beta^*$ & $\beta^* / (c\tau)$\\
  \hline\begin{tabular}{c}
          any $c$   \\
          $\tau=1$ \\
        \end{tabular}
     & $1 + \tfrac{\sigma}{s}\left(\tfrac{\sigma'-1}{\sigma'}\right)$ & $ s + \sigma\left(\tfrac{\sigma'-1}{\sigma'}\right)$\\
  \hline
  \begin{tabular}{c}
          $c=1$  \\
        any $\tau$ \\
        \end{tabular}
     & $1+\tfrac{(\tau-1)(\sigma-1)}{d-1}$ & $\tfrac{d}{\tau}\left(1+ \tfrac{(\tau-1)(\sigma-1)}{d-1}\right)$\\
       \hline
    $\tau c=d$         & $\sigma$  & $\sigma$ \\
  \hline
\end{tabular}
\caption{Stepsize parameter $\beta=\beta^*$ and the leading factor in the rate \eqref{eq:k_uniform_strong} (assuming $\mu_\Reg=0$) for several special cases of Hydra.}
\label{tbl:beta}
}
\end{centering}
\end{table}


\paragraph{The effect of $\sigma'$.}  If $c = 1$, then by Lemma~\ref{eq:sigma-omega}, $\sigma'=c=1$, and hence $\beta^*_2=0$. However, for $c>1$ we \emph{may} have $\beta^*_2>0$, which can hence be seen as a price we need to pay for using more nodes. The price depends on the way the data is partitioned to the nodes, as captured by $\sigma'$. In favorable circumstances, $\sigma'\approx 1$ even if $c>1$, leading to $\beta_2^* \approx 0$. However, in general we have the bound $\sigma'\geq \tfrac{c\sigma}{d}$, which gets worse as $c$ increases and, in fact, $\sigma'$ can be as large as $c$. Note also that $\xi$ is decreasing in $\tau$, and that $\xi(s,s)=0$. This means that by choosing $\tau=s$ (which effectively removes randomization from Hydra), the effect of $\beta^*_2$ is eliminated. This may not be always possible as often one needs to solve problems with $s$ vastly larger than the number of updates that can be performed on any given node in parallel. If $\tau\ll s$, the effect of $\beta^*_2$ can be controlled, to a certain extent, by choosing a partition with small $\sigma'$. Due to the way $\sigma'$ is defined, this may not be an easy task. However, it may be easier to find partitions that minimize $\omega'$, which is often a good proxy for $\sigma'$. Alternatively, we may ignore estimating $\sigma'$ altogether by setting $\beta=2\beta_1^*$, as mentioned before, at the price of at most doubling the number of iterations.

\paragraph{Speedup by increasing $\tau$.} Let us fix $c$ and compare the quantities $\gamma_\tau\eqdef \tfrac{\beta^*}{c\tau}$ for $\tau=1$ and $\tau=s$. We now show that $\gamma_1 \geq \gamma_s$, which means that if all coordinates are updated at every node, as opposed to one only, then Hydra run with $\beta=\beta^*$ will take fewer iterations. Comparing the 1st and 3rd row of Table~\ref{tbl:beta}, we see that $\gamma_1 = s+ \sigma \tfrac{\sigma'-1}{\sigma'}$ and $\gamma_s = \sigma$. By Lemma~\ref{lem:shssujs7}, $\gamma_1-\gamma_s = s - \tfrac{\sigma}{\sigma'}\geq 0$.


{\footnotesize
\begin{figure}[htp]
 \centering
 \subfigure[$(c,s) = (1,10^5)$]{ \includegraphics[width=1.5in]{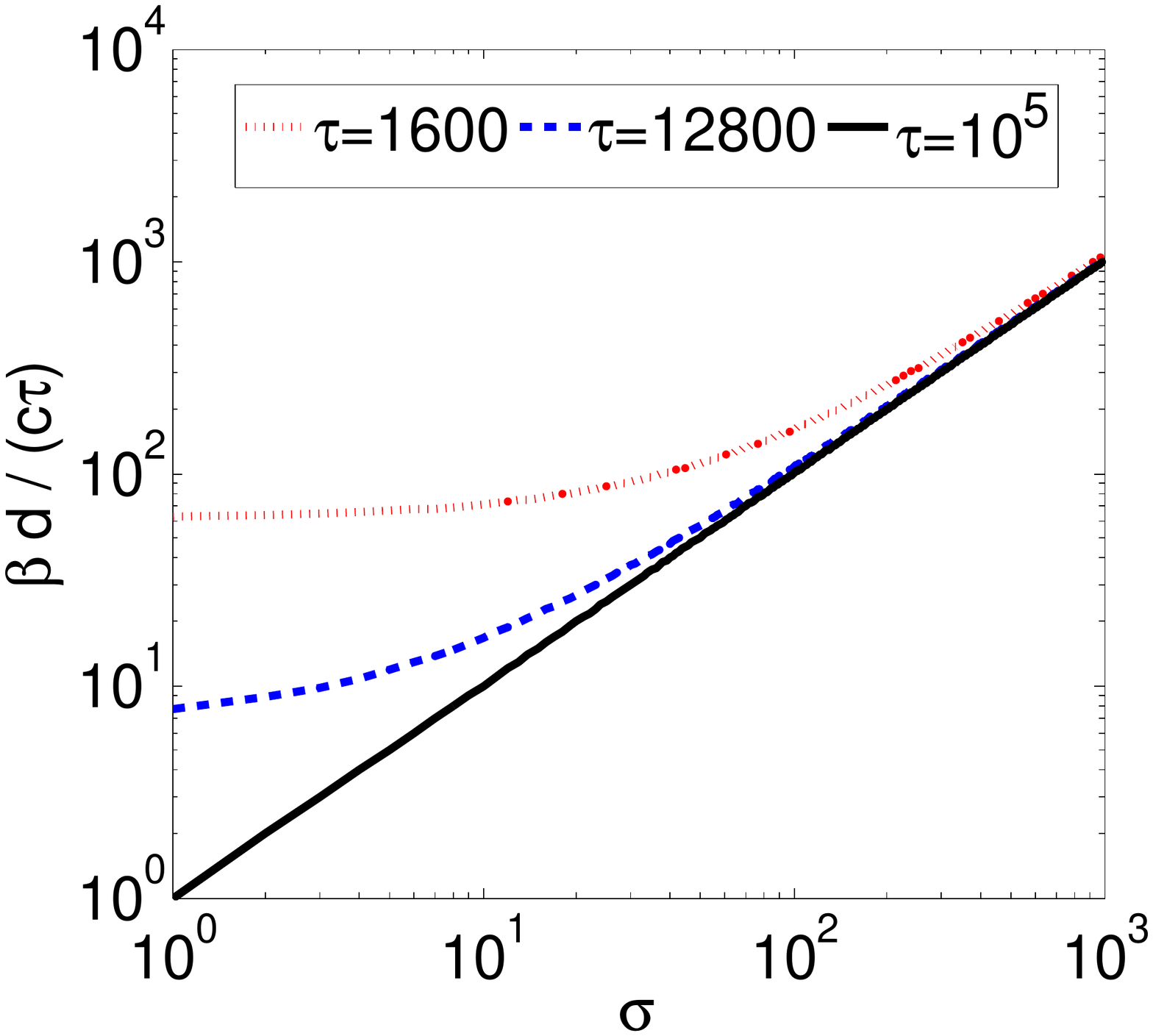}}
 \subfigure[$(c,s)=(10^2,10^3)$]{\includegraphics[width=1.5in]{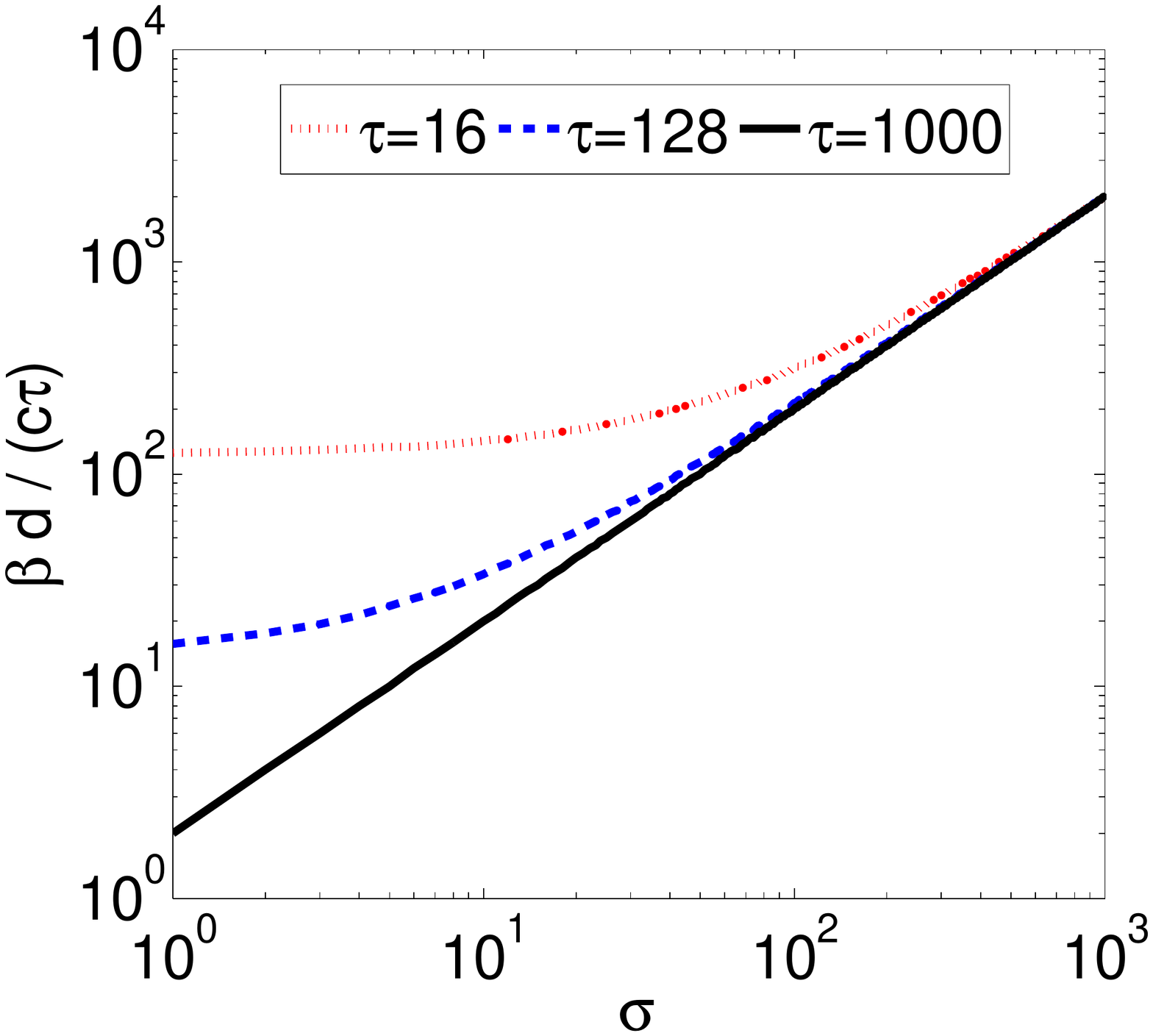}}
\caption{In terms of the number of iterations, very little is lost by using $c>1$ as opposed to $c=1$.}
 \label{fig:reduceAll}
\end{figure}
}


\paragraph{Price of distribution.} For illustration purposes, consider a problem with $d=10^5$ coordinates. In Figure~\ref{fig:reduceAll}(a) we depict the size of $\tfrac{d\beta_1^*}{c\tau}$ for $c=1$ and several choices of $\tau$, as a function of $\sigma$. We see that Hydra works better for small values of $\sigma$ and that with increasing $\sigma$, the benefit of using updating more coordinates diminishes.  In Figure~\ref{fig:reduceAll}(a) we consider the same scenario, but with $c=100$ and $s=1000$, and we plot $\tfrac{d2\beta_1^*}{c\tau}$ on the $y$ axis. Note that the red dotted line in both plots corresponds to a parallel update of 1600 coordinates. In (a) all are updated on a single node, whereas in (b) we have 100 nodes, each updating 16 coordinates at a time. Likewise, the dashed blue dashed and solid black lines are also comparable in both plots. Note that the setup with $c=10$ has a slightly weaker performance, the lines are a bit lower. This is the price we pay for using $c$ nodes as opposed to a single node (obviously, we are ignoring communication cost here). However, in big data situations one simply has no other choice but to utilize more nodes.


\subsection{Comparison with other methods}

While we are not aware of any other \emph{distributed} coordinate descent method, Hydra in the $c=1$ case is closely related to several existing parallel coordinate descent methods.

\paragraph{Hydra vs Shotgun.} The Shotgun algorithm (parallel coordinate descent) of \citet{Bradley:PCD-paper} is similar to Hydra for $c=1$. Some of the differences: \citet{Bradley:PCD-paper} only consider $\Reg$ equal to the $L1$ norm and their method works in dimension $2d$ instead of the native dimension $d$. Shotgun was not analyzed for strongly convex $f$, and convergence in expectation was established. Moreover, \citet{Bradley:PCD-paper} analyze the step-size choice $\beta = 1$, fixed independently of the number of parallel updates $\tau$, and give results that hold only in a ``small $\tau$'' regime. In contrast, our analysis works for any choice of $\tau$.

\paragraph{Hydra vs PCDM.} For $c=1$, Hydra reduces to the parallel coordinate descent method (PCDM) of \citet{RT:PCDM}, but with a \emph{better} stepsize parameter $\beta$. We were able to achieve smaller $\beta$ (and hence better rates) because we analyze a different and more specialized class of loss functions (those satisfying \eqref{eq:kvadraticUpperbound}). In comparison, \citet{RT:PCDM} look at a general class of partially separable losses. Indeed, in the $c=1$ case, our distributed sampling $\hat{S}$ reduces to the sampling considered in \citep{RT:PCDM} ($\tau$-nice sampling). Moreover, our formula for $\beta$ (see Table~\ref{tbl:beta}) is essentially identical to the formula for $\beta$ provided in \citep[Theorem~14]{RT:PCDM}, with the exception that we have $\sigma$ where they have $\omega$. By \ref{eq:sigma-omega}, we have $\sigma\leq \omega$, and hence our $\beta$ is smaller.

\paragraph{Hydra vs SPCDM.} SPCDM of \cite{FR:SPCDM2013} is PCDM applied to a smooth approximation of a nonsmooth convex loss; with a special choice of $\beta$, similar to $\beta_1$. As such, it extends the reach of PCDM to a large class of nonsmooth losses, obtaining $O(\tfrac{1}{\epsilon^2})$ rates.

\paragraph{Hydra vs mini-batch SDCA.}  \citet{minibatch-ICML2013} studied the performance of a mini-batch stochastic dual coordinate ascent for SVM dual (``mini-batch SDCA''). This is a special case of our setup with $c=1$, convex quadratic $f$ and $\Reg_i(t) = 0$ for $t \in [0,1]$ and $\Reg_i(t)=+\infty$ otherwise. Our results can thus be seen as a generalization of the results in that paper to a larger class of loss functions $f$, more general regularizers $\Reg$, and most importantly, to the distributed setting ($c>1$). Also, we give $O(\log \tfrac{1}{\epsilon})$ bounds under strong convexity, whereas \citep{minibatch-ICML2013} give $O(\tfrac{1}{\epsilon})$ results without assuming strong convexity. However,  \citet{minibatch-ICML2013} perform a primal-dual analysis, whereas we do not.

\section{Distributed computation of the gradient}\label{sec:implementation}

In this section we described some important elements of our distributed implementation.


\begin{table}[!tp]
 \centering
 \small
 \begin{tabular}{|c|c|c|}
 \hline
  $\ell$       & $f'_i(x)$ & $\bM_{ii}$ \\
         \hline
 SL      & $ \sum_{j=1}^m \quad - \bA_{ji} (y^j - \mrow{\bA}{j} x)$ 					  & $ \|\mcol{\bA}{i}\|_2^2$		\\
 LL      & $ \sum_{j=1}^m \quad -  y^j \bA_{ji} \frac{ \exp(-y^j \mrow{\bA}{j} x)} {1 + \exp( - y^j \mrow{\bA}{j} x)}$ & $ \frac14  \|\mcol{\bA}{i}\|_2^2$ \\
 HL  & $ \sum_{j \st  y^j \mrow{\bA}{j} x < 1}   \left(  - y^j \bA_{ji} (1 - y^j \mrow{\bA}{j} x)\right)$  		  & $ \|\mcol{\bA}{i}\|_2^2$\\
 \hline
\end{tabular}
\caption{Information needed in Step 5 of Hydra for $f$ given by \eqref{eq:SVMproblemFormulation} in the case of the three losses $\ell$ from Table~1.}\label{tbl:lossFunctions2}
\end{table}

Note that in Hydra, $x_k$ is stored in a distributed way. That is, the  values  $x_k^i$ for $i \in \B_l$ are stored on computer $l$. Moreover, Hydra partitions $\bA$ columnwise as $\bA=[\bA_1,\dots,\bA_c]$, where $\bA_l$ consists of columns $i \in \B_l$ of $\bA$, and stores $\bA_l$ on computer $l$. So,  $\bA$ is chopped into smaller pieces with stored in a distributed way in fast memory (if possible) across the $c$ nodes. Note that this allows the method to work with large matrices.

At Step 5 of Hydra, node $l$ at iteration $k+1$ needs to know the partial derivatives $f'_i(x_{k+1})$ for $i \in \hat{S}_l \subseteq \B_l$. We now describe several efficient distributed protocols for the computation of $f'_i(x_{k+1})$ for functions $f$ of the form \eqref{eq:SVMproblemFormulation}, in the case of the three losses $\ell$ given in Table~1 (SL, LL, HL). The formulas for $f'_i(x)$ are summarized in Table~3 ($\bA_{j:}$ refers to the $j$-th row of $\bA$). Let $D^y \eqdef \Diag(y)$.

\subsection{Basic protocol} If we write $h_k^i = 0$ if $i$ is not updated in iteration $k$, then
\begin{equation}\label{eq:hhsjhsjs}
\vt{x}{k+1} = \vt{x}{k} + \sum_{l=1}^c \sum_{i \in \hat{S}_l}  h^i_k e_i.
\end{equation}
Now, if we let
\begin{align}
\label{eqn:residuals}
\vt{g}{k} \eqdef & \; \begin{cases}
 \; \bA \vt{x}{k} - y, & \mbox{ for SL},\\
 \; - D^y \bA \vt{x}{k},  & \mbox{ for LL and HL},
   \end{cases}
\end{align}
then by combining \eqref{eq:hhsjhsjs} and \eqref{eqn:residuals}, we get
\[g_{k+1}= g_k +\sum_{l=1}^c \delta g_{k,l}, \qquad \text{where}\]
\[\delta g_{k,l} = \begin{cases} \sum_{i \in \hat{S}_l} h_k^i \mcol{\bA}{i}, & \text{for SL}, \\
\sum_{i \in \hat{S}_l}       -h_k^i D^y \mcol{\bA}{i}, & \text{for LL and HL}.
\end{cases}\]
Note that the value $\delta g_{k,l}$ can be computed on node $l$ as all the required data is stored locally. Hence, we let each node compute $\delta g_{k,l}$, and then use a {\it reduce all} operation to add up the updates to obtain $g_{k+1}$, and pass the sum  to all nodes. Knowing $g_{k+1}$, node $l$ is then able to compute $f'_i(x_{k+1})$ for any $i \in \B_l$  as follows:
\begin{align*}
f'_i(x_{k+1}) = & \;
\begin{cases}
 \; \mcol{\bA}{i}^T g_{k+1} = \sum_{j=1}^n \; \bA_{ji} g_{k+1}^j, & \mbox{ for SL},\\   
 \; \sum_{j=1}^n \; y^j \bA_{ji}  \frac{ \exp( g_{k+1}^j) }{1+\exp( g_{k+1}^j )},  &\mbox{ for LL},\\
 \; \sum_{j \st g_{k+1}^j > -1} \; y^j \bA_{ji} (1 + g_{k+1}^j ), &\mbox{ for HL}.
   \end{cases}
\end{align*}

\subsection{Advanced protocols}

The basic protocol discussed above has obvious drawbacks. Here we identify them and propose modifications leading to better performance.
\begin{itemize}
 \item {\it alternating Parallel and Serial regions (PS):} The basic protocol alternates between two procedures: i) a computationally heavy one (done in parallel) with no MPI communication, and ii)  MPI communication (serial). An easy fix would be to dedicate 1 thread to deal with communication and the remaining threads within the same computer for computation. We call this protocol \emph{ Fully Parallel (FP)}. Figure~\ref{fig:serialBlocksAndParallel} compares the basic (left) and FP (right) approaches.

     \useFIG{
\begin{figure}[htp]
 \centering
 \includegraphics[width=3.1in]{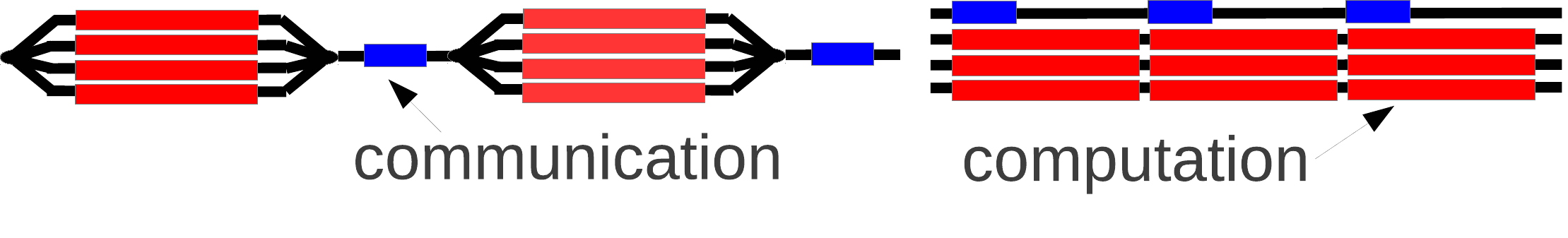}
 \caption{Parallel-serial (PS; left) vs Fully Parallel (FP; right) approach.}
 \label{fig:serialBlocksAndParallel}
\end{figure}
}

 \item {\it Reduce All (RA):} In general, reduce all operations may significantly degrade the performance of distributed algorithms. Communication taking place only between nodes close to each other in the network, e.g., nodes directly connected by a cable, is more efficient. Here we propose the \emph{Asynchronous StreamLined (ASL)} communication protocol in which each node, in a given iteration, sends only 1 message (asynchronously) to a nearby computer, and also receives only one message (asynchronously) from another nearby computer. Communication hence takes place in an {\em Asynchronous Ring}. This communication protocol requires significant changes in the algorithm. Figure~\ref{fig:ASL} illustrates the flow of messages at the end of the $k$-th iteration for $c=4$.

\useFIG{
 \begin{figure}[htp]
 \centering
 \includegraphics[width=2in]{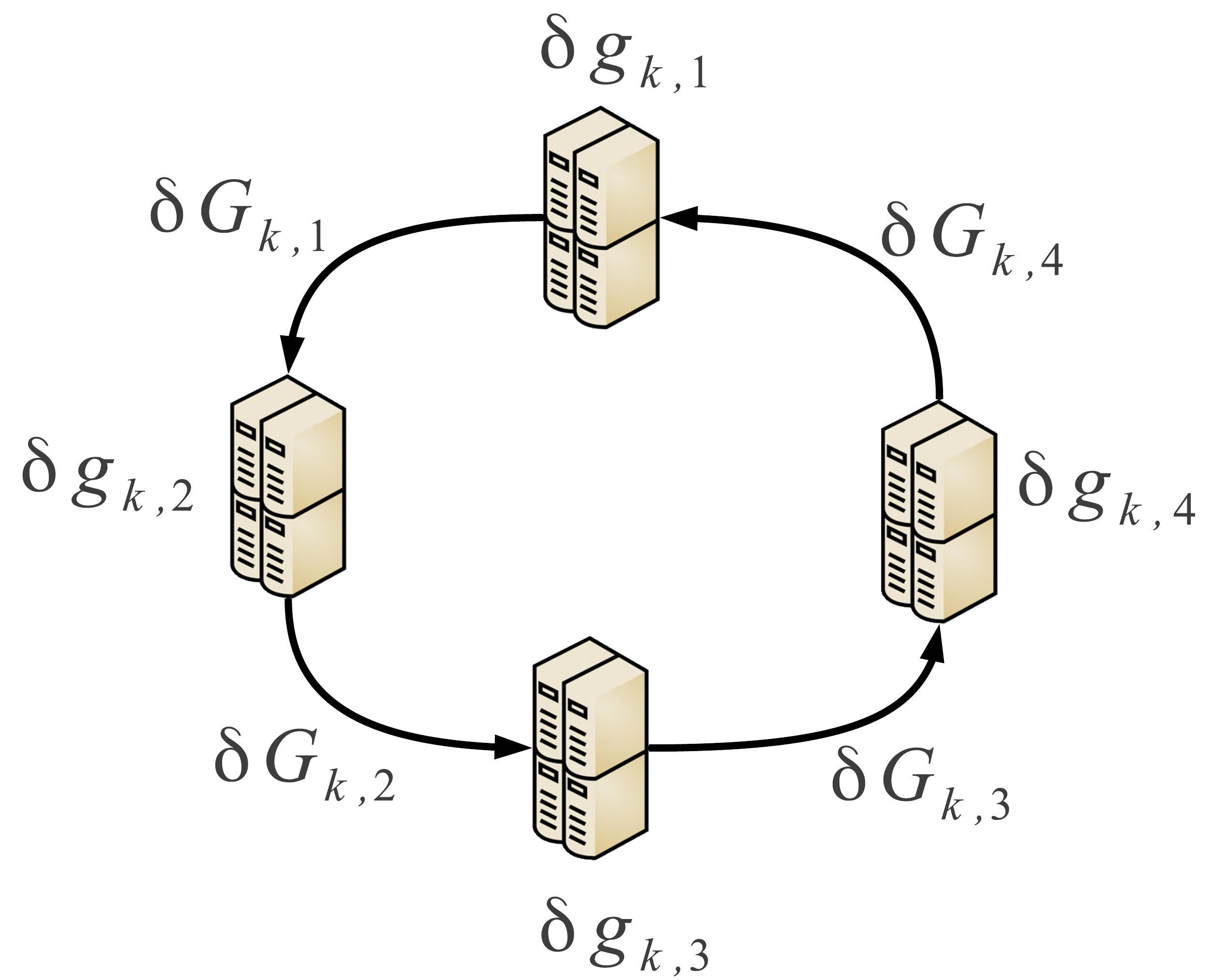}
 \caption{ASL protocol with $c=4$ nodes. In iteration $k$, node $l$ computes $\delta g_{k,l}$,
and sends $\delta G_{k,l}$ to $l_+$.}
 \label{fig:ASL}
\end{figure}
}

We order the nodes into a ring, denoting $l_-$ and $l_+$ the two nodes neighboring node $l$. Node $l$ only receives data  from $l_-$, and sends data to  $l_+$. Let us denote by $\delta G_{k,l}$ the data sent by node $l$  to  $l_+$ at the end of iteration $k$. When  $l$ starts iteration $k$, it already knows $\delta G_{k-1,l_-}$.\footnote{Initially, we let $\delta g_{k,l} = \delta G_{k,l} = 0$ for all $k \leq 0$.} Hence, data which will be sent at the end of the $k$-th iteration by node $l$ is given by
\begin{equation}\label{eq:cumulativeupdate}
\delta G_{k,l} = \delta G_{k-1,l_-} - \delta g_{k-c, l} + \delta g_{k,l}.
\end{equation}
This leads to the update rule
\[
  g_{k+1,l} = g_{k,l} + \delta g_{k,l}  + \delta G_{k,l_-} - \delta g_{k-c+1, l}.
\]

ASL needs less communication per iteration. On the other hand, information is propagated more slowly to the nodes through the ring, which may adversely affect the number of iterations till convergence (note that we do not analyze Hydra with this communication protocol). Indeed, it takes $c-1$
iterations to propagate information to all nodes. Also, storage requirements have increased: at iteration $k$ we need to store the  vectors
$\delta g_{t,l}$ for $k-c \leq t \leq k$ on computer $l$.

\end{itemize}

\section{Experiments} \label{SEC:EXPERIMENTS}

In this  section we present numerical evidence that Hydra is capable to efficiently solve big data problems. We have a C++ implementation, using Boost::MPI and OpenMP. Experiments were executed on a Cray XE6 cluster with 128 nodes; with each node equipped with two AMD Opteron Interlagos 16-core processors and 32 GB of RAM. We consider a LASSO problem, i.e., $f$ given by \eqref{eq:SVMproblemFormulation} with $\ell$ being the square loss (SL) and $\Reg(x) = \|x\|_1$. In order to to test Hydra under controlled conditions, we adapted the LASSO generator proposed by \citet[Section 6]{Nesterov:2007composite}; modifications were necessary as the generator does not work well in the big data setting.

\begin{table}[htp]
 \centering
 \small
 \footnotesize
\begin{center}
\begin{tabular}{c|c|c||r|r}
$\TT$ & comm. protocol & organization & avg. time & speedup
 \\ \hline \hline

$10$&RA & PS & 0.040 & ---
\\
$10$&RA & FP & 0.035 & 1.15
\\
$10$&ASL& FP & 0.025 & 1.62
\\ \hline
$10^2$&RA & PS & 0.100 & ---
\\
$10^2$&RA & FP & 0.077 & 1.30
\\
$10^2$&ASL& FP & 0.032 & 3.11
\\ \hline
$10^3$&RA & PS & 0.321 & ---
\\
$10^3$&RA & FP & 0.263 & 1.22
\\
$10^3$&ASL& FP & 0.249 & 1.29\\
\hline
\end{tabular}
\end{center}
 \caption{Duration of a single Hydra iteration for 3 communication protocols.
The basic RA-PS protocol is always the slowest, but follows the theoretical analysis. ASL-FP can be 3$\times$ faster.}
 \label{tbl:tweaks}
\end{table}

\paragraph{Basic communication protocol vs advanced protocols.}
As discussed in Section~\ref{sec:implementation}, the advantage of the RA protocol is the fact that Theorem~\ref{thm:complexity-strongly-convex-case} was proved  in this setting, and hence can be used as a safe benchmark for comparison with the advanced protocols.


Table~\ref{tbl:tweaks} compares the average time per iteration for the 3 approaches and 3 choices of $\tau$. We used $128$ nodes, each running 4 MPI processes (hence $c=512$). Each MPI process runs 8 OpenMP threads, giving 4,096 cores in total. The data matrix $\bA$ has $n=10^9$ rows and $d = 5 \times 10^8$ columns, and  has 3 TB,  double precision. One can observe that in all cases, ASL-FP yields largest gains compared to the benchmark RA-PS protocol. Note that ASL has some overhead in each iteration, and hence  in cases when computation per node is small ($\tau=10$), the speedup is only 1.62. When $\tau=10^2$ (in this case the durations of computation and communication were comparable),  ASL-FP is 3.11 times faster than RA-PS. But the gain becomes again only moderate for $\tau=10^3$; this is because computation now takes much longer than communication, and hence the choice of strategy for updating the auxiliary vector $g_k$ is less significant. Let us  remark that the use of larger $\tau$ requires larger $\beta$, and hence possibly more iterations (in the worst case).

\paragraph{Huge LASSO problem.} We generated a sparse matrix $\bA$ with block angular structure, depicted in \eqref{eq:block_ang}.

\begin{equation}\label{eq:block_ang}
\bA=\left(
\begin{BMAT}(rc){c;c;c;c}{ccc;c}
\bA_{1}^{loc} & 0   & \cdots & 0\\
0 & \bA_{2}^{loc}   & \cdots & 0\\
\vdots & \vdots    & \ddots & \vdots\\
\bA^{glob}_{1} & \bA^{glob}_{2}   & \cdots & \bA^{glob}_{c}
\end{BMAT}
\right).
\end{equation}

Such matrices often arise in stochastic optimization. Each Hydra head (=node) $l$ owns two matrices: $\bA^{loc}_{l} \in \R^{1,952,148 \times 976,562}$
and $\bA^{glob}_{l} \in \R^{500,224 \times 976,562}$. The average number of nonzero elements per row in the local part of $\bA_l$ is $175$, and $1,000$ for the global part. Optimal solution $x_*$ has exactly $160,000$ nonzero elements. Figure~\ref{fig:ASL_vs_RA} compares the evolution of $\Loss(x_k)-\Loss^*$ for  ASL-FP and RA-FP.

\emph{Remark:} When communicating $g_{kl}$, only entries corresponding to the global part of $\bA_l$  need to be communicated, and hence in RA, a {\it reduce all} operation is applied to vectors $\delta g_{glob,l} \in \R^{500,224}$. In ASL, vectors with the same length are sent.

{\footnotesize
\begin{figure}[htp]
 \centering
 \includegraphics[width=3.3in]{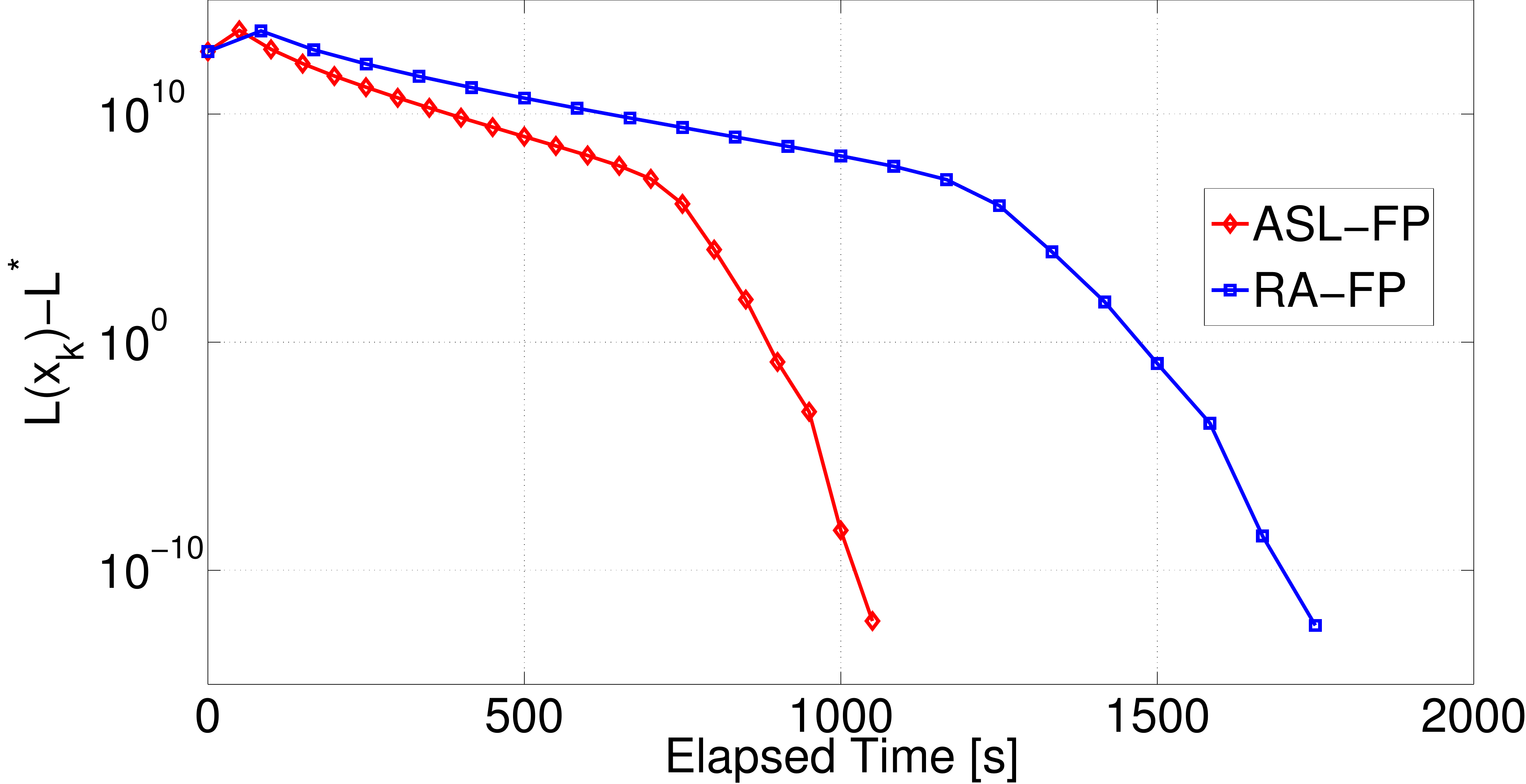}
 \caption{Evolution of $\Loss(x_k)-\Loss^*$ in time. ASL-FP significantly  outperforms  RA-FP. The loss $\Loss$ is pushed down by  25 degrees of magnitude in less than 30 minutes (3TB problem).}
 \label{fig:ASL_vs_RA}
\end{figure}
}

\section{Extensions}

Our results can be extended to the setting where coordinates are replaced by blocks of coordinates, as in \citep{Nesterov:2010RCDM}, and to partially separable losses, as in \citep{RT:PCDM}.

\bibliographystyle{icml2014}
\bibliography{paper}


\clearpage
\onecolumn
\appendix

\section{Proof Lemma~\ref{lem:shssujs7}}

\begin{enumerate}
\item The inequality $\omega' \leq c$ is obviously true. By considering $x$ with zeroes in all coordinates except those that belong to $\B_k$ (where $k$ is an arbitrary but fixed index),  we see that $x^T \bQ x = x^T B^\bQ x$, and hence $\sigma'\geq 1$.

\item We now establish that $\sigma'\leq \omega'$.  Let $\phi(x) = \tfrac{1}{2}x^T \bQ x$, $x\in \R^d$; its gradient is
\begin{equation}\label{eq:ssjdd9}\phi'(x) = \bQ x.\end{equation}
 For each $k=1,2,\dots,c$, define a pair of conjugate norms on $\R^s$ as follows:
\begin{equation}\label{eq:sjs8sns8s}\|v\|_{(k)}^2 \eqdef \ve{\bQ^{kk}v}{v}, \qquad (\|v\|_{(k)}^*)^2 \eqdef \max_{\|v'\|_{(k)}\leq 1} \ve{v'}{v} = \ve{(\bQ^{kk})^{-1}v}{v}. \end{equation}

Let $\bU_k$ be a column submatrix of the $d$-by-$d$ identity matrix corresponding to columns $i \in \B_k$. Clearly, $\bA_k = \bA \bU_k$ and $\bU_k^T \bQ \U_k$ is the $k$-th diagonal block of $\bQ$, i.e.,
 \begin{equation}\label{eq:jhd009909}\bU_k^T \bQ \bU_k \overset{\eqref{eq:Q}}{=}  \bQ^{kk}.\end{equation}
Moreover, for $x \in \R^d$ and $k \in \{1,2,\dots,c\}$, let $x^{(k)} = \bU_k^T x$ and, fixing positive scalars $w_1,\dots,w_c$, define a norm on $\R^d$ as follows:
\begin{equation}\label{eq:block_norm}\|x\|_w \eqdef \left(\sum_{k=1}^c w_k \|x^{(k)}\|_{(k)}^2 \right)^{1/2}.\end{equation}

Now, we claim that for each $k$,
\[\|\bU_k^T \phi'(x+\bU_k h^{(k)}) - \bU_k^T \phi'(x)\|_{(k)}^* \leq  \|h^{(k)}\|_{(i)}.\]
This means that $\phi'$ is block Lipschitz (with blocks corresponding to variables in $\B_k$), with respect to the norm $\|\cdot\|_{(k)}$, with Lipschitz constant $1$. Indeed, this is, in fact, satisfied with equality:
\begin{eqnarray*}
 \|\bU_k^T \phi'(x+\bU_k h^{(k)}) - \bU_k^T \phi'(x)\|_{(k)}^* &\overset{\eqref{eq:ssjdd9}}{=}& \|\bU_k^T \bQ (x+\bU_k h^{(k)}) - \bU_k \bQ x\|_{(k)}^*\\
&=& \|\bU_k^T  \bQ \bU_k h^{(k)}\|_{(k)}^* \\
&\overset{\eqref{eq:jhd009909}}{=}&   \|\bQ^{kk} h^{(k)}\|_{(k)}^* \\
& \overset{\eqref{eq:sjs8sns8s}}{=} & \ve{(\bQ^{kk})^{-1} \bQ^{kk} h^{(k)}}{ \bQ^{kk} h^{(k)}} \quad \overset{\eqref{eq:sjs8sns8s}}{=} \quad \|h^{(k)}\|_{(k)}.\end{eqnarray*}

This is relevant because then, by  \citet[Theorem 7; see comment 2 following the theorem]{RT:PCDM}, it follows that $\phi'$ is Lipschitz with respect to $\|\cdot\|_{w}$, where $w_k=1$ for all $k=1,\dots,c$, with Lipschitz constant $\omega'$ ($\omega'$ is the degree of partial block separability of $\phi$ with respect to the blocks $\B_k$). Hence,

\[\tfrac{1}{2}x^T \bQ x = \phi(x) \leq \phi(0) + (\phi'(0))^T x + \frac{\omega'}{2}\|x\|^2_{w} \overset{\eqref{eq:sjs8sns8s}+\eqref{eq:block_norm}}{=} \frac{\omega'}{2}\sum_{k=1}^c \ve{\bQ^{kk}x^{(k)}}{x^{(k)}} = \frac{\omega'}{2} (x^T B^{\bQ} x) ,\]
which establishes the inequality $\sigma'\leq \omega'$.

\item We now show that $\tfrac{\sigma}{s}\leq \sigma'$. If we let $\theta \eqdef \max\{x^T B^\bQ x : x^T x \leq 1\}$, then $x^T B^\bQ x \leq \theta
    x^T x$ and hence $\{x \st x^T x \leq 1\} \subseteq \{x \st x^T B^\bQ x \leq \theta\}$. This implies that
    \[\sigma = \max_x \{x^T \bQ x \st x^Tx \leq 1\} \leq \max_x \{x^T \bQ x \st x^T B^\bQ x \leq \theta\} = \theta \sigma'.\]
    It now only remains to argue that $\theta \leq s$. For $x\in \R^d$, let $x^{(k)}$ denote its subvector in $\R^s$ corresponding to coordinates $i \in \B_k$ and $\Delta = \{p \in \R^c : p \geq 0, \; \sum_{k=1}^c p_k =1\}$. We can now write
    \begin{eqnarray*}
    \theta &=&  \max_x \left\{ \sum_{k=1}^c (x^{(k)})^T \bQ^{kk} x^{(k)} \st \sum_{k=1}^c (x^{(k)})^T x^{(k)} \leq 1\right\}\\
    &=& \max_{p\in \Delta}  \sum_{k=1}^c \left\{ \max (x^{(k)})^T \bQ^{kk} x^{(k)} \st (x^{(k)})^T x^{(k)} = p_k \right\}\\
    &=& \max_{p\in \Delta}  \sum_{k=1}^c  p_k \max \left\{(x^{(k)})^T \bQ^{kk} x^{(k)} \st (x^{(k)})^T x^{(k)} = 1 \right\}\\
    &=& \max_{1\leq k \leq c} \max \left\{(x^{(k)})^T \bQ^{kk} x^{(k)} \st (x^{(k)})^T x^{(k)} = 1 \right\}    \quad \leq \quad s.
    \end{eqnarray*}
   In the last step we have used the fact that $\sigma(\bQ)  =\sigma \leq c = \dim(\bQ)$, proved in steps 1 and 2, applied to the setting $\bQ\leftarrow \bQ^{kk}$.

\item The chain of inequalities $1\leq \sigma \leq \omega \leq c$ is obtained as a special case of the chain $1\leq \sigma' \leq \omega' \leq d$ (proved above) when
$c=d$ (and hence $\B_l = \{l\}$ for $l=1,\dots,d$). Indeed, in this case $B^\bQ = D^\bQ$, and so $x^T B^\bQ x = x^T D^\bQ x = x^T x$, which means that
$\sigma'=\sigma$ and $\omega'=\omega$.
\end{enumerate}

\section{Proof of Lemma~\ref{prop:2beta_1}}

It is enough to argue that $\beta_2^* \leq \beta_1^*$. Notice that $\beta_2^*$ is increasing in $\sigma'$. On the other hand, from Lemma~\ref{lem:shssujs7} we know that $\sigma'\leq c = \tfrac{d}{s}$. So, it suffices to show that
\[\left(\frac{\tau}{s}-\frac{\tau-1}{s-1}\right)\left(1-\frac{s}{d}\right)\sigma \leq 1 + \frac{(\tau-1)(\sigma-1)}{s-1}.\]
After straightforward simplification we observe that this inequality is equivalent to
$(s-\tau) + (\tau-2)\sigma + \tfrac{\sigma}{d}(s+\tau) \geq 0$,
which clearly holds.

\section{Proof of Lemma~\ref{lem:lem8}}
In the $s=1$ case the statement is trivially true. Indeed, we must have $\tau=1$ and thus $\Prob(\hat{S} =  \{1,2,\dots,d\})=1$, $h^{\hat{S}} = h$, and hence \[\E\left[ (h^{\hat{S}})^T \bQ h^{\hat{S}} \right] = h^T \bQ h.\] This finishes the proof since $\tfrac{\tau-1}{s_1}=0$.

Consider now the $s>1$ case. From Lemma~3 in \citet{RT:PCDM} we get
\begin{equation} \label{eq:9sjs8j3sj8}\E\left[(h^{\hat{S}})^T \bQ h^{\hat{S}}\right] = \sum_{i \in \hat{S}}\sum_{j \in \hat{S}} \bQ_{ij} h^i  h^j = \sum_{i=1}^d \sum_{j=1}^d p_{ij} \bQ_{ij} h^i  h^j,\end{equation}
where $p_{ij} = \Prob(i \in \hat{S} \;\& \; j \in \hat{S})$. One can easily verify that
\[p_{ij} = \begin{cases}\frac{\tau}{s}, & \text{if } i=j,\\
\frac{\tau(\tau-1)}{s(s-1)}, & \text{if } i \neq j \text{ and } i\in \B_{l}, \; j \in \B_l \text{ for some } l,\\
\frac{\tau^2}{s^2}, & \text{if } i\neq j \text{ and } i\in \B_{k}, \; j\in \B_{l} \text{ for }k \neq l.\end{cases}\]
In particular, the first case follows from Eq (32)  and the second from Eq (37) in \citet{RT:PCDM}. It only remains to substitute $p_{ij}$ into \eqref{eq:9sjs8j3sj8} and transform the result into the desired form.

\end{document}